\newtheorem{lemma}{Lemma}
\newtheorem{theorem}{Theorem}
\newtheorem{corollary}{Corollary}
\newtheorem{definition}{Definition}
\newtheorem*{definition*}{Definition}
\DeclareMathOperator{\argmin}{argmin} 
\DeclareMathOperator{\cone}{cone}
\DeclareMathOperator{\logdet}{logdet}
\definecolor{brightpink}{rgb}{1.0, 0.0, 0.5}
\title{Checking the Sufficiently Scattered Condition \\ 
using a Global Non-Convex Optimization Software} 
\date{}
\author{
Nicolas Gillis\thanks{Email: \{nicolas.gillis\}@umons.ac.be. NG acknowledges the support by the European Union (ERC consolidator, eLinoR, no 101085607), and by the Francqui Foundation.} \\ University of Mons \\ Mons, Belgium 
\and Robert Luce \\ Gurobi 
%\and Timothy Marrinan \\ Pacific Northwest National Laboratory \\ Seattle, Washington, USA 
	}
\begin{document}

\maketitle

\begin{abstract}
The sufficiently scattered condition (SSC) is a key condition in the study of identifiability of various matrix factorization problems, including 
nonnegative, 
minimum-volume, 
symmetric, 
simplex-structured, 
and polytopic matrix factorizations. The SSC allows one to guarantee that the computed matrix factorization is unique/identifiable, up to trivial ambiguities.  
However, this condition is NP-hard to check in general. In this paper, we show that it can however be checked in a reasonable amount of time in realistic scenarios, when the factorization rank is not too large. 
This is achieved by formulating the problem as a non-convex quadratic optimization problem over a bounded set. We use the  global non-convex optimization software Gurobi, and showcase the usefulness of this code on synthetic data sets and on real-world hyperspectral images.   
\end{abstract}

\textbf{Keywords:} 
sufficiently scattered condition, 
identifiability, 
uniqueness, 
nonnegative matrix factorization, 
non-convex quadratic optimization.

\section{Introduction} \label{sec:intro}

Low-rank matrix factorizations (LRMFs) are central techniques in numerical linear algebra, with the singular value decompositions (SVD) and principal component analysis (PCA) as the most famous examples. 
LRMFs are widely used in data analysis, statistics, signal processing, control, optimization, and machine learning; see, e.g.,~\cite{markovsky2011low, UHZB14, kishore2017literature}. 
Given an input matrix $X \in \mathbb{R}^{m \times n}$ and a factorization rank $r$, LRMF aims at finding 
$W \in \mathcal{W} \subseteq \mathbb{R}^{m \times r}$ and 
$H \in \mathcal{H} \subseteq \mathbb{R}^{r \times n}$ such that $X \approx WH$. The sets  $\mathcal{W}$ and $\mathcal{H}$ impose constraints on $W$ and $H$, such as orthogonality in SVD and PCA, and sparsity in sparse PCA~\cite{dAspremont2007spca}. 
These additional constraints allow one to more easily interpret the factors, and are often motivated by the application at hand. 

Among LRMFs, nonnegative matrix factorization (NMF)~\cite{lee1999learning}, which imposes nonnegative constraint on the factors and assumes $X$ is nonnegative, has become a standard tool as well; see~\cite{cichocki2009nonnegative, xiao2019uniq, gillis2020nonnegative} and the references therein. 
Nonnegativity is motivated for example by physical considerations, e.g., in imaging, audio signal processing and chemometrics, or by probabilistic interpretations, e.g., in topic modeling.  
A key aspect in these applications is that the factorization is unique, a.k.a.\ identifiable (we will use both terms interchangeably), 
which allows one to recover the groundtruth factors that generated the data (such as the sources in blind source separation).  
A factorization, $X = WH$, is unique/identifiable  if for any other factorization, $X = W'H'$, there exists a permutation $\pi$ of $\{1,2,\dots,r\}$ and scaling factors $\{\alpha_k\}_{k=1}^r$ such that for all $k$ 
\begin{equation} \label{eq:permscal}
W'(:,k) = \alpha_k W(:,\pi_k) 
\quad \text{ and } \quad 
H'(k,:) = \alpha_k^{-1} H(\pi_k,:). 
\end{equation}
Unfortunately, nonnegativity is typically not enough to ensure the uniqueness; 
see the discussions in~\cite{fu2018identifiability, gillis2020nonnegative} and the references therein.  
A stronger condition that ensures uniqueness is the so-called sufficiently scattered condition (SSC) which requires some degree of sparsity within the factors $W$ and $H$; see Section~\ref{sec:ssc} for a formal definition. 
The uniqueness of NMF under the SSC was presented in~\cite{huang2013non}, and later lead to numerous identifiability results for other LRMFs, namely: 
\begin{itemize}
    \item Minimum-volumne NMF~\cite{fu2015blind, lin2015identifiability} which seeks for an NMF decomposition that minimizes the volume of the convex full of the columns of $W$. 

    \item Simplex-structured matrix factorization~\cite{lin2018maximum, abdolali2021simplex} which relaxes the constraints on $W$ but imposes $H$ to be column-wise stochastic. A bounded version, where the entries of $W$ are bounded~\cite{vuthanh2023bounded}, has also been explored and shown to be identifiable under SSC-like conditions. 
    
    \item Symmetric NMF with applications in topic modeling~\cite{fu2016robust, fu2018anchor}. 

\end{itemize}

The SSC condition was also generalized for polytopic matrix factorizations~\cite{tatli2021polytopic, vuthanh2023bounded} and for bounded component analysis~\cite{hu2023identifiable} which are  generalizations of NMF where the nonnegative orthant is replaced by polytopes. 
The SSC has also been used to provide identifiability in many other contexts where constrained matrix factorizations play a crucial role. 
This has been the case in particular in machine learning tasks, such as 
topic modeling~\cite{huang2016anchor, fu2018anchor}, 
crowd sourcing~\cite{ibrahim2019crowdsourcing}, 
recovering joint probability~\cite{ibrahim2021recovering}, 
label-noise learning~\cite{li2021provably}, deep constrained clustering~\cite{nguyen2023deep}, 
 dictionary learning~\cite{hu2023dico}, and tensor decompositions~\cite{sun2023volume}.  

%For example, in \cite[Remark~2.3]{} (Algorithm for checking SS condition).

\paragraph{Outline and contribution} In summary, the SSC plays a critical role in checking whether a wide class of LRMFs with nonnegativity constraints are identifiable. Unfortunately, the SSC is NP-hard to check in general (see Section~\ref{sec:ssc} for more details). 
To the best of our knowledge, there currently does not exist a solver that checks the SSC, even for small-size problems. 
In this paper, we overcome this limitation by leveraging current global non-convex quadratic optimization software, and in particular Gurobi, to check the SSC for relatively large matrices. 

The paper is organized as follows. We first define the SSC rigorously, in Section~\ref{sec:ssc}, after having introduced useful concepts in convex geometry. Then we show in Section~\ref{sec:formu} how checking the SSC is equivalent to solving an non-convex quadratic program. 
In Section~\ref{sec:solvformu}, we provide an alternative formulation with box constrained, which is crucial to using global non-convex quadratic optimization software. 
We report numerical experiments in Section~\ref{sec:numexp} on synthetic and real data sets, showing that Gurobi can solve relatively large instance, for a factorization rank $r$ up to a few dozen, and input matrices of size up to a few thousands.

\paragraph{Notation} Given a vector $x \in \mathbb{R}^r$, $\|x\|_2$ %and $\|x\|_1$ 
denotes its $\ell_2$ norm and $x^\top$ its transpose. The nonnegative orthant in dimenion $r$ is denoted  $\mathbb{R}^r_+$. %and its $\ell_1$ norm, respectively. 
The vector of all ones is denoted $e$, and of all zeros $0$. 
The identity matrix is denoted $I$, and its $i$th column is denoted $e_i$ (a.k.a.\ the $i$th unit vector).  
The dimensions of $e$, $0$, $I$ and $e_i$ will be clear from the context. For a matrix $H$, $H(:,j)$ and $H(i,:)$ 
denote its $j$th column and $i$th row, respectively. The set $\{1,2,\dots,r\}$ is denoted $[r]$.

\section{Preliminaries: 
cones, their duals and the SSC} \label{sec:ssc}

Given a matrix $H \in \mathbb{R}^{r \times n}$, we define the cone generated by its columns as 
\[
\cone(H) = \{ x \ | \ x = Hy, y \geq 0\}. 
\] 
The dual of a cone $\mathcal{H}$ is defined as 
\[
\mathcal{H}^* = \big\{ x \ | \ x^\top z \geq 0 \text{ for all } z \in \mathcal{H} \big\}. 
\]
In particular, the dual cone of $\cone(H)$ is given by 
\begin{align*}
\cone^*(H) 
& = \big\{ x \ | \ x^\top z \geq 0, \text{ for all } z \in \cone(H) \big\} \\ 
& = \big\{ x \ | \ x^\top H y = (H^\top x)^\top y \geq 0, \text{ for all } y \geq 0 \big\} \\
& = \big\{ x \ | \ H^\top x \geq 0 \big\}. 
\end{align*}
Another cone we will need is the following second-order cone:  
\[ 
\mathcal{C} 
= \big\{ x \in \mathbb{R}^r  
\ | \ e^\top x \geq \sqrt{r-1} \| x \|_2 \big\}, 
\] 
which is contained in the nonnegative orthant. 
Its dual cone is given by 
$\mathcal{C}^* 
= \big\{  x \in \mathbb{R}^r  
\ | \ e^\top x \geq \| x \|_2 \big\}$ and contains the nonnegative orthant (which is self-dual). 
An important and easy-to-check property of duality is that $\mathcal{H}_1 \subseteq \mathcal{H}_2$ if and only if 
$\mathcal{H}_2^* \subseteq \mathcal{H}_1^*$.

\paragraph{The SSC} We will use the following definition of the SSC. 
\begin{definition}[SSC, \cite{huang2013non}] \label{def:SSC}
A nonnegative matrix $H \in \mathbb{R}^{r \times n}_+$ satisfies the sufficiently scattered condition (SSC)  
if 
\begin{equation} \label{eq:SSCmain}
\mathcal{C} 
= \{ x \in \mathbb{R}^r  
\ | \ e^\top x \geq \sqrt{r-1} \| x \|_2 \} 
\quad 
\subseteq 
\quad  
\cone(H) = \{ x \ | \ x = Hy, y \geq 0\}, \tag{SSC1}
\end{equation} 
and 
\begin{equation} \label{eq:SSCcond2}
\text{any $q \in \cone^*(H) \cap \big\{x \ | \ e^\top x = \|x\|_2 \big\}$ is a scaling of a unit vector (that is, of $e_i$ for some $i$).}   \tag{SSC2} 
\end{equation}
\end{definition} 

There actually exist several slight variations of the definition of the SSC.  
All of them include the requirement~\eqref{eq:SSCmain}, while the second condition is slightly modified: 
\begin{itemize}
    \item \cite{fu2015blind} requires that there does not exist any orthogonal matrix $Q$ such that $\cone(H) \subseteq \cone(Q)$, except for permutation matrices. This is a slight relaxation of~\eqref{eq:SSCcond2}. 

    \item \cite{lin2015identifiability} requires $\cone(H)$ to contain a slightly larger cone than $\mathcal{C}$, namely, $\mathcal{C}_q = \{ x \in \mathbb{R}^r_+ 
\ | \ e^\top x \geq q \| x \|_2 \}$ for any $q < \sqrt{r-1}$, which is slightly more restrictive 
than~\eqref{eq:SSCcond2}.   
\end{itemize}

We chose Definition~\ref{def:SSC} because it is slightly simpler to present; however, our formulations can be  easily adapted to handle the other definitions above.

\paragraph{Identifiability} 
As mentioned in the introduction, the SSC allows one to provide identifiability results for various matrix factorizations with nonnegativity constraints. It is out of the scope of this paper to list all of these results, and we just mention the first that appeared in the literature, for NMF. We discuss another one, namely minimum-volume NMF, in Section~\ref{sec:hsi}. 
\begin{theorem}[\cite{huang2013non}] \label{th:uniqNMF}
    Let $X = W H$ be an NMF of $X$ with factorization rank $r$, where ${W}^\top$ and $H$ satisfy the SSC. 
    Then this NMF is unique, that is, any other NMF of $X$ with factorization rank $r$, $X = W'H'$ with $W' \geq 0$ and $H' \geq 0$, can be obtained by permutation and scaling of $WH$; see~\eqref{eq:permscal}. 
\end{theorem}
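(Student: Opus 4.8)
The plan is to reduce uniqueness to a rigidity property of a simplicial cone trapped between $\mathcal{C}$ and $\mathcal{C}^*$, and then to invoke~\eqref{eq:SSCcond2}. Since $W^\top$ and $H$ satisfy~\eqref{eq:SSCmain}, the cones $\cone(W^\top)$ and $\cone(H)$ are full-dimensional, so $W$ has full column rank, $H$ has full row rank, and $\rank(X)=r$. Hence any other rank-$r$ NMF $X=W'H'$ has $\rank(W')=\rank(H')=r$ and $\col(W')=\col(X)=\col(W)$, so that $W'=WA$ for a unique invertible $A\in\mathbb{R}^{r\times r}$; cancelling the full-column-rank factor $W$ in $WH=WAH'$ gives $H'=A^{-1}H$. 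It then suffices to prove that $A$ is a permutation matrix times a positive diagonal matrix, i.e.\ $A(:,k)=\alpha_k e_{\pi_k}$ for a permutation $\pi$ and scalars $\alpha_k>0$; plugging such an $A$ into $W'=WA$ and $H'=A^{-1}H$ reproduces exactly~\eqref{eq:permscal}.

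Next I would turn the sign constraints on $W'$ and $H'$ into cone inclusions: from $H'=A^{-1}H\ge 0$ one gets $\cone(H)\subseteq A\mathbb{R}^r_+$, and from $(W')^\top=A^\top W^\top\ge 0$ one gets $\cone(W^\top)\subseteq A^{-\top}\mathbb{R}^r_+$. Invoking~\eqref{eq:SSCmain} for $H$ and for $W^\top$, using the identity $(A\mathbb{R}^r_+)^*=A^{-\top}\mathbb{R}^r_+$, and dualizing the first inclusion (which reverses containments), these combine into
\[
\mathcal{C}\ \subseteq\ \cone(W^\top)\ \subseteq\ A^{-\top}\mathbb{R}^r_+\ \subseteq\ \cone^*(H)\ \subseteq\ \mathcal{C}^*.
\]
Hence the simplicial cone $S:=A^{-\top}\mathbb{R}^r_+$ satisfies $\mathcal{C}\subseteq S\subseteq\mathcal{C}^*$, with the refinement $S\subseteq\cone^*(H)$.

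The key lemma is a rigidity statement: any full-dimensional simplicial cone $S$ with $\mathcal{C}\subseteq S\subseteq\mathcal{C}^*$ has all of its extreme rays on $\{x:e^\top x=\|x\|_2\}=\partial\mathcal{C}^*$ (in fact $S$ is then an orthogonal image of $\mathbb{R}^r_+$ obtained by a rotation about the axis $e$). I would prove this by restricting to the unit sphere, where $\mathcal{C}$ and $\mathcal{C}^*$ become concentric geodesic balls about $e/\sqrt{r}$ of radii $\arccos\sqrt{(r-1)/r}$ and $\arccos(1/\sqrt{r})$, and $S$ becomes a geodesic $(r-1)$-simplex trapped between them; a circumradius--inradius inequality for geodesic simplices, whose equality case is the regular simplex cut out by $\mathbb{R}^r_+$, then forces the inradius and circumradius of that simplex to the two extreme values simultaneously, which pins the simplex down. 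Granting the lemma, the columns $b_i:=A^{-\top}e_i$ of $A^{-\top}$ (the extreme rays of $S$) satisfy $e^\top b_i=\|b_i\|_2$ and, since $b_i\in S\subseteq\cone^*(H)$, lie in $\cone^*(H)\cap\{x:e^\top x=\|x\|_2\}$; by~\eqref{eq:SSCcond2} for $H$, each $b_i$ is a positive scaling of some unit vector $e_{\sigma(i)}$. Linear independence of the $b_i$ makes $\sigma$ a permutation, so $A^{-\top}$, and therefore $A$, is a permutation matrix times a positive diagonal matrix, which completes the argument.

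I expect the rigidity lemma to be the main obstacle, and it is essential: the sandwich $\mathcal{C}\subseteq S\subseteq\mathcal{C}^*$ alone does \emph{not} force $S=\mathbb{R}^r_+$, since every rotation of the orthant about the axis $e$ is also trapped between $\mathcal{C}$ and $\mathcal{C}^*$. This is precisely why~\eqref{eq:SSCmain} by itself is not enough for uniqueness, and why~\eqref{eq:SSCcond2} is needed to remove that residual rotational freedom; consequently the lemma cannot be obtained by a soft duality argument and requires a genuine (and somewhat delicate) spherical-geometry estimate. Only~\eqref{eq:SSCmain} for both factors together with~\eqref{eq:SSCcond2} for $H$ enter the argument above; by symmetry one could instead dualize the other inclusion and apply~\eqref{eq:SSCcond2} to $W^\top$.
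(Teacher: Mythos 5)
The paper itself gives no proof of Theorem~\ref{th:uniqNMF} (it is quoted from \cite{huang2013non}), so I am judging your argument against the standard one in that reference. Your overall architecture is exactly right and matches it: full-dimensionality of $\mathcal{C}$ gives $\rank(X)=r$ and the reduction $W'=WA$, $H'=A^{-1}H$; the nonnegativity of $W',H'$ plus~\eqref{eq:SSCmain} for both factors yields the sandwich $\mathcal{C}\subseteq A^{-\top}\mathbb{R}^r_+\subseteq \cone^*(H)\subseteq\mathcal{C}^*$; and once the extreme rays $b_i=A^{-\top}e_i$ are known to lie on $\{x: e^\top x=\|x\|_2\}$, condition~\eqref{eq:SSCcond2} plus linear independence forces $A$ to be a monomial matrix, which is~\eqref{eq:permscal}. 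You also correctly diagnose that the sandwich alone cannot suffice (rotations of the orthant about $e$ are also sandwiched) and that~\eqref{eq:SSCcond2} is what kills this residual freedom.

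The one genuine gap is the rigidity lemma itself: you assert that any simplicial cone $S$ with $\mathcal{C}\subseteq S\subseteq\mathcal{C}^*$ has all its extreme rays on $\partial\mathcal{C}^*$, but you only sketch a proof via an unproved circumradius--inradius inequality for geodesic simplices with equality characterization. That inequality is not standard, and establishing it (including the concentricity bookkeeping needed to pin the circumcenter to $e/\sqrt{r}$) would be a substantial piece of spherical geometry in its own right --- so as written the crux of the theorem is not proved. The lemma is true, however, and admits a short algebraic proof, which is essentially how \cite{huang2013non} argues. Since $\mathcal{C}\subseteq S\subseteq\mathcal{C}^*$ implies the same sandwich for $S^*=A\mathbb{R}^r_+$, the columns $a_i$ of $A$ lie in $\mathcal{C}^*$ and may be normalized (this is the trivial scaling ambiguity) so that $e^\top a_i=1$, whence $\|a_i\|_2\le 1$. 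Writing $e=A\lambda$ with $\lambda\ge 0$ (possible since $e\in\mathcal{C}\subseteq A\mathbb{R}^r_+$) gives $\sum_i\lambda_i=e^\top A\lambda=r$ and $e^\top b_i=\lambda^\top A^\top b_i=\lambda_i\ge\|b_i\|_2$. Hadamard's inequality and AM--GM then give
\begin{equation*}
1=|\det A|\,|\det A^{-\top}|\;\le\;\Bigl(\prod_i\|a_i\|_2\Bigr)\Bigl(\prod_i\|b_i\|_2\Bigr)\;\le\;\prod_i\lambda_i\;\le\;\Bigl(\tfrac{1}{r}\sum_i\lambda_i\Bigr)^r=1,
\end{equation*}
forcing equality throughout, in particular $\|b_i\|_2=e^\top b_i$ for all $i$, which is exactly what your final step needs. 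Substituting this determinant argument for the spherical-geometry lemma makes your proof complete.
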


\section{Checking the SSC: necessary condition 
 and formulation} \label{sec:formu} 

Before checking the SSC, it will be useful to check whether a simple necessary condition holds. 

\paragraph{Necessary condition for the SSC} 

The cone $\mathcal{C}$ contains the points $e-e_i$ for $i \in [r]$ at its border, since $e^\top(e-e_i) = \sqrt{r-1}\|e-e_i\|_2 = r-1$, and hence 
\[
\mathcal{T} = \cone\big(ee^\top - I\big)  
\; \subset \; \mathcal{C}. 
\] 
We therefore have the following necessary condition for the SSC. 
\begin{definition}[Necessary Condition for the SSC, NC-SSC]\cite[p.~119]{gillis2020nonnegative} 
    The matrix $H \in \mathbb{R}^{r \times n}_+$ satisfies the NC-SSC if  $\mathcal{T} = \cone\big(ee^\top - I\big) \subset \cone(H)$, that is, $e-e_i \in \cone(H)$ for $i \in [r]$. 
\end{definition}

The NC-SSC can be easily checked, in polynomial time, by solving systems of linear inequalities: for all $i \in [r]$, check that there exists $y \geq 0$ such that $e-e_i = Hy$. 

In the remainder, we will assume that this necessary condition has been checked, that is, $\mathcal{T} \subset \cone(H)$, otherwise $H$ cannot satisfy the SSC. 

Note that the NC-SSC (and hence the SSC) require a certain degree of sparsity of $H$, since its cone must contain the vectors $\{e-e_i\}_{i=1}^r$ that contain a zero entry. In fact, one can show that a necessary condition for the SSC to hold is that $H$ has at least $r-1$ zeros per row~\cite{huang2013non, fu2018identifiability, gillis2020nonnegative}.

\paragraph{Checking the SSC via non-convex quadratic optimization}   

The first condition of the SSC~\eqref{eq:SSCmain} is equivalent to $\cone^*(H) \subseteq \mathcal{C}^*$. 
This condition is not satisfied if there exists $x \in \cone^*(H)$ while $x \notin \mathcal{C}^*$, that is, if there exists $x$ such that 
\begin{equation} \label{eq:checkSSC1}
H^\top x \geq 0  \quad \text{ and } \quad e^\top x < \|x\|_2. 
\end{equation}
These conditions remain valid when $x$ is multiplied by any positive constant. 
Moreover, we have the following lemma. 
\begin{lemma} \label{lem:etxpos}
    Let $H \in \mathbb{R}^{r \times n}_+$ satisfy the NC-SSC. 
    Then for any $x \neq 0$ satisfying $H^\top x \geq 0$, 
    we have $e^\top x > 0$. 
\end{lemma}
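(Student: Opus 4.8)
The plan is to exploit the NC-SSC hypothesis to convert the single vector constraint $H^\top x \geq 0$ into a whole family of scalar inequalities, one per coordinate of $x$, and then to combine them by summation, using $x \neq 0$ only at the very end to upgrade a weak inequality to a strict one.

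First I would invoke the NC-SSC: for each $i \in [r]$ there exists $y_i \geq 0$ with $e - e_i = H y_i$. Multiplying on the left by $x^\top$ and using $H^\top x \geq 0$ together with $y_i \geq 0$ gives
\[
e^\top x - x_i \;=\; (e - e_i)^\top x \;=\; y_i^\top (H^\top x) \;\geq\; 0 ,
\]
so $e^\top x \geq x_i$ for every $i \in [r]$. Summing these $r$ inequalities yields $r\,(e^\top x) \geq \sum_{i=1}^r x_i = e^\top x$, hence $(r-1)\,e^\top x \geq 0$ and therefore $e^\top x \geq 0$ (this uses $r \geq 2$; the case $r = 1$ is immediate, since then $\cone(H) \subseteq \mathbb{R}$ and $H^\top x \geq 0$ with $x \neq 0$ forces $e^\top x = x > 0$ as soon as $H$ has a nonzero entry).

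It then remains to exclude equality. If $e^\top x = 0$, then $x_i \leq e^\top x = 0$ for every $i$, while at the same time $\sum_{i=1}^r x_i = e^\top x = 0$; a sum of nonpositive numbers that equals zero forces each summand to vanish, so $x = 0$, contradicting the hypothesis $x \neq 0$. Hence $e^\top x > 0$, as claimed.

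I do not foresee a genuine obstacle here: the argument is essentially a one-line duality computation plus a summation trick. The only point that deserves a moment of care is the transition from $e^\top x \geq 0$ to the strict inequality, which is precisely where $x \neq 0$ enters, together with the (harmless) degenerate case $r = 1$.
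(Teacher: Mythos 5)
Your proof is correct and follows essentially the same route as the paper's: the paper passes through the dual-cone inclusion $\cone^*(H)\subset\mathcal{T}^*$ to obtain the inequalities $e^\top x\ge x_i$, sums them to get $e^\top x\ge 0$, and then argues that equality forces all constraints to be tight and hence $x=0$; you derive the identical inequalities directly from $e-e_i=Hy_i$ and close the equality case by the same tightness argument. The unwinding of the duality step and the explicit $r=1$ aside are only presentational differences.
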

\begin{proof} 
Since $H$ satisfies the NC-SSC, $\mathcal{T} = \cone(ee^\top - I) \subset \cone(H)$, 
and hence $\cone^*(H) \subset \mathcal{T}^*$, where 
\begin{equation*} 
\mathcal{T}^*   
= \Big\{ x \in \mathbb{R}^r \ | \ (ee^\top - I) x \geq 0 \Big\}  
  = \Big\{x \in \mathbb{R}^r \ \big| \ e^\top x - x_i = \sum_{i \neq j} x_i \geq 0 \text{ for any } j \in [r] \Big\}. 
\end{equation*}  
Summing the inequalities defining $\mathcal{T}^*$, we obtain $e^\top x \geq 0$. 
%\begin{enumerate}
    %\item $e^\top x \geq 0$ for any $x \in \mathcal{T}^*$, by summing all inequalities defining $\mathcal{T}^*$, and  
    %\item 
    It therefore remains to show that, for $x \in \mathcal{T}^*$, 
    $e^\top x = 0$  if and only if $x = 0$. 
    In fact, $e^\top x = 0$ implies that all inequalities defining $\mathcal{T}^*$ must be active at $x$, otherwise their sum is positive and we would get $e^\top x > 0$, a contradiction. 
   % such a point must satisfy all inequalities defining $\mathcal{T}^*$ at equality, otherwise $e^\top x > 0$, so that 
   Hence $(ee^\top - I) x = 0$ implying $x = 0$ since $ee^\top - I$ is full rank.  
\end{proof}

%\end{enumerate}

Now, going back to~\eqref{eq:checkSSC1}, we have the following corollary. 
\begin{corollary} \label{cor:etx1}
    Let $H \in \mathbb{R}^{r \times n}_+$ satisfy the NC-SSC. 
    Then~\eqref{eq:SSCmain} is not satisfied if and only if the system 
\begin{equation} \label{eq:SSCmainetx1} 
H^\top x \geq 0 \quad \text{ and } \quad e^\top x = 1 < \|x\|_2 
\end{equation} 
has a solution. 
\end{corollary}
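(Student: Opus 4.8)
The plan is to chain together two equivalences, one of which is already recorded in the text just before the statement. Recall that \eqref{eq:SSCmain} is equivalent to the dual inclusion $\cone^*(H) \subseteq \mathcal{C}^*$, and that this inclusion fails precisely when the system \eqref{eq:checkSSC1}, namely $H^\top x \geq 0$ and $e^\top x < \|x\|_2$, admits a solution. Hence it suffices to prove that \eqref{eq:checkSSC1} is solvable if and only if \eqref{eq:SSCmainetx1} is solvable, and I would do this by two implications.

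The implication from \eqref{eq:SSCmainetx1} to \eqref{eq:checkSSC1} is immediate: any solution of the former satisfies $e^\top x = 1 < \|x\|_2$, so in particular $e^\top x < \|x\|_2$, and $H^\top x \geq 0$ is assumed, so it is also a solution of \eqref{eq:checkSSC1}. For the converse, I would start from a solution $x$ of \eqref{eq:checkSSC1} and normalize it so that $e^\top x = 1$. First note $x \neq 0$, since $x = 0$ would give $e^\top x = 0 = \|x\|_2$, contradicting the strict inequality. Then invoke Lemma~\ref{lem:etxpos}: since $H$ satisfies the NC-SSC and $x \neq 0$ with $H^\top x \geq 0$, we get $e^\top x > 0$. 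Setting $\tilde{x} = x / (e^\top x)$, positive scaling preserves $H^\top \tilde{x} \geq 0$, gives $e^\top \tilde{x} = 1$, and yields $\|\tilde{x}\|_2 = \|x\|_2 / (e^\top x) > 1 = e^\top \tilde{x}$ because $\|x\|_2 > e^\top x > 0$. Thus $\tilde{x}$ solves \eqref{eq:SSCmainetx1}.

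The only real subtlety — and precisely the reason Lemma~\ref{lem:etxpos} is needed here — is guaranteeing $e^\top x > 0$, so that dividing by $e^\top x$ is a genuine positive rescaling: without the NC-SSC hypothesis one could a priori have $e^\top x \leq 0$, in which case normalizing to $e^\top x = 1$ would either be impossible or would reverse the inequality $H^\top x \geq 0$. Everything else in the argument is routine bookkeeping with the scale invariance of the two conditions.
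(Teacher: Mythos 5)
Your proof is correct and follows essentially the same route as the paper: note that $x=0$ cannot satisfy the strict inequality, invoke Lemma~\ref{lem:etxpos} to get $e^\top x>0$, and use the positive-scaling invariance of the system to normalize to $e^\top x=1$. You have simply spelled out the bookkeeping that the paper's one-line proof leaves implicit.
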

\begin{proof}
Since $x = 0$ does not satisfy $e^\top x < \|x\|_2$, we can assume w.l.o.g., by Lemma~\ref{lem:etxpos},  that $e^\top x > 0$, and hence, using the scaling degree of freedom, that $e^\top x = 1$. 
\end{proof}

Checking whether~\eqref{eq:SSCmainetx1} has a solution for $H$ satisfying the NC-SSC can be done by solving the following optimization problem: 
\begin{equation}
p^* \quad   = \quad   \max_{x} \|x\|_2 
\; \text{ such that } \;  
e^\top x = 1 \text{ and } H^\top x \geq 0. \label{eq:SSCopt}
\end{equation} 
If $p^* > 1$, then $H$ does not satisfy~\eqref{eq:SSCmain}. 
Otherwise, $p^* = 1$, and to check whether $H$ satisfies the SSC, 
we need to check the second condition~\eqref{eq:SSCcond2}, that is, whether there exists $q \neq e_i$ for all $i$ 
such that $e^\top q = \|q\|_2$ and $H^\top q \geq 0$. \\ 
%By Lemma~\ref{lem:etxpos}, we can assume w.l.o.g.\ that $e^\top q = 1$. %so we need to check that the only optimal solutions of~\eqref{eq:SSCopt} are $x^* = e_i$ for $i=1,2,\dots,r$. 

In summary, $H$ satisfies the SSC if and only if 
(i)~$H$ satisfies the NC-SSC, 
and (ii)~the only optimal solutions of~\eqref{eq:SSCopt} are 
$e_i$ for $i \in [r]$, with optimal value $p^* = 1$. This is why it is NP-hard to check the SSC, because~\eqref{eq:SSCopt} is the maximization of a convex function over a polytope which is NP-hard in general~\cite{freund1985complexity, huang2013non}.

\section{Solving~\eqref{eq:SSCopt} with Global Non-Convex Quadratic Optimization} \label{sec:solvformu}

The problem~\eqref{eq:SSCopt} cannot directly be handled by global non-convex  quadratic optimization solvers, such as Gurobi, because such solvers require a bounded feasible set, with lower and upper bounds on the variables. This allows them to use the McCormick envelopes~\cite{mccormick1976computability}, and rely on branch-and-bound strategies.  
In a few words, the idea is as follows: for every product of two variables that appears in the objective or in the constraints, say the product of the variables $x$ and $y$, a new variable is introduced, $z = xy$. Given that $x$ and $y$ belong to a bounded box, that is, $x \in [\underline{x}, \bar{x}]$ and $y \in [\underline{y}, \bar{y}]$, the equality $z=xy$ is approximated from above and below with linear constraints as follows: 
\begin{equation*}
 z  \leq \underline{y} x + \bar{x} y - \bar{x}  \underline{y},  \; 
 z  \leq  \bar{y} x + \underline{x} y - \underline{x} \bar{y},  \;
 z  \geq \underline{y} x +\underline{x} y -\underline{x} \underline{y}, \; 
 z  \geq \bar{y} x       + \bar{x} y - \bar{x} \bar{y}. 
\end{equation*}
 Figure~\ref{fig:mccormick} provides an illustration of such a McCormick envelope. 
 \begin{figure}[ht!]
\begin{center}
\includegraphics[width=10cm]{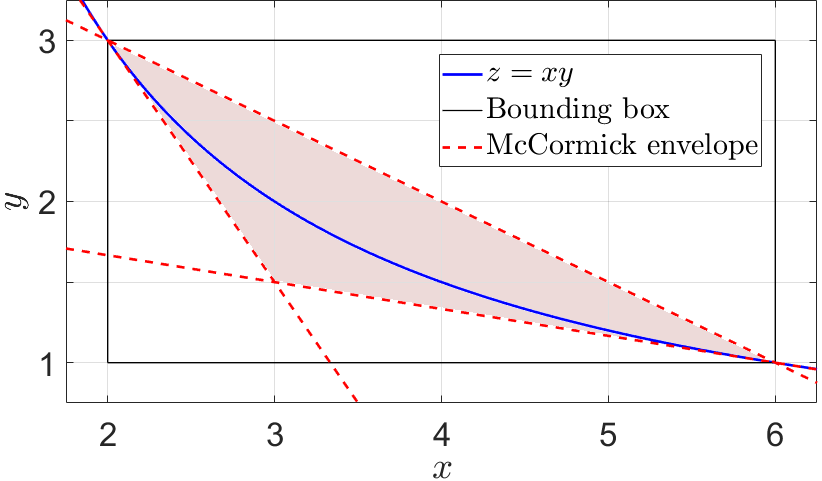} 
\caption{Illustration of the McCormick envelope for the nonlinear constraint $xy = 6$, with $x \in [2,6]$ and $y \in [1,3]$. (Note that, in this example, the last two inequalities, defining the upper bound, coincide.)}
\label{fig:mccormick}
\end{center}
\end{figure} 
This McCormick linearization is then improved along the algorithm by splitting the feasible set in smaller and smaller intervals for each variable, following branch-and-bound strategies; see, e.g., \cite{mitchell2014convex} and the references therein.    

Let us now show how any feasible solution $x$ of~\eqref{eq:SSCopt} can be bounded. 
\begin{lemma} \label{lem:bounds}
Let $H \in \mathbb{R}^{r \times n}_+$ satisfy the NC-SSC. 
Then the entries of any feasible solution $x$ of~\eqref{eq:SSCopt} satisfy $x_i \in [2-r,1]$ for all $i$. 
\end{lemma}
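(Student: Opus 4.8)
# Proof Proposal for Lemma 3

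The plan is to derive the bounds directly from the two constraints defining the feasible set of~\eqref{eq:SSCopt}, namely $e^\top x = 1$ and $H^\top x \geq 0$, while exploiting the assumption that $H$ satisfies the NC-SSC. The key observation is that the NC-SSC gives us, via duality, membership of $x$ in the explicit polyhedral cone $\mathcal{T}^*$ already computed in the proof of Lemma~\ref{lem:etxpos}. Combining $x \in \mathcal{T}^*$ with the normalization $e^\top x = 1$ should pin down each coordinate to the interval $[2-r, 1]$.

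First I would recall that, since $H$ satisfies the NC-SSC, we have $\mathcal{T} = \cone(ee^\top - I) \subset \cone(H)$, hence $\cone^*(H) \subset \mathcal{T}^*$, and any feasible $x$ of~\eqref{eq:SSCopt} satisfies $H^\top x \geq 0$, i.e.\ $x \in \cone^*(H) \subseteq \mathcal{T}^*$. As established in the proof of Lemma~\ref{lem:etxpos}, this means $e^\top x - x_j \geq 0$ for every $j \in [r]$. Now I substitute the normalization $e^\top x = 1$: the inequality $e^\top x - x_j \geq 0$ becomes $1 - x_j \geq 0$, i.e.\ $x_j \leq 1$ for all $j$. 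This gives the upper bound immediately. For the lower bound, fix an index $i$ and write $x_i = e^\top x - \sum_{j \neq i} x_j = 1 - \sum_{j \neq i} x_j$. Since each $x_j \leq 1$ by the upper bound just proved, the sum $\sum_{j \neq i} x_j \leq r-1$, and therefore $x_i \geq 1 - (r-1) = 2 - r$. This yields $x_i \in [2-r, 1]$ for all $i$, as claimed.

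I do not anticipate a serious obstacle here; the argument is a short chain of inequalities. The only thing to be careful about is making sure the chain is applied in the right order — the upper bound $x_j \leq 1$ must be established first (from $x \in \mathcal{T}^*$ together with $e^\top x = 1$), and then reused to get the lower bound. One should also note explicitly that $\mathcal{T}^*$ is exactly the set $\{x : e^\top x - x_j \geq 0 \text{ for all } j \in [r]\}$, which was already derived in the proof of Lemma~\ref{lem:etxpos}, so this can simply be cited rather than re-derived. It is worth remarking that these bounds are tight: the unit vectors $e_i$, which are feasible for~\eqref{eq:SSCopt}, attain the upper bound $x_i = 1$, and a vector with one coordinate equal to $2-r$ and the remaining $r-1$ coordinates equal to $1$ lies on the boundary of $\mathcal{T}^*$ with $e^\top x = 1$, showing the box $[2-r,1]^r$ cannot be shrunk without further assumptions on $H$.
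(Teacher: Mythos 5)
Your proof is correct and follows essentially the same route as the paper: both deduce $x\leq e$ from $H^\top x\geq 0$ together with $\cone^*(H)\subseteq\mathcal{T}^*$ (a consequence of the NC-SSC) and the normalization $e^\top x=1$, and then obtain the lower bound $x_i=1-\sum_{j\neq i}x_j\geq 2-r$. Your tightness remark also matches the paper's observation that $e+(1-r)e_i$ attains the lower bound for $H=ee^\top-I$.
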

\begin{proof}
Since $\cone(ee^\top - I) \in \cone(H)$ for $i \in [r]$, 
we have $(ee^\top - I) x = e - x \geq 0$ and hence $x \leq e$. 
% First, note that, for any $j$, 
% \[
% e^\top x = x_j + \sum_{i \neq j} x_j = 1, 
% \]
% and hence 
% \[
% x_j  = 1 - \sum_{i \neq j} x_j. 
% \]
% Since $\sum_{i \neq j} x_j \geq 0$, this implies $x_j \leq 1$ for all $j$. 
Since $e^\top x = 1$ and $x \leq e$, we have $x_i = 1 - \sum_{j \neq i} x_j \geq 1 - (r-1) = 2-r$ for all $i$. 
\end{proof}
Note that the lower bound in Lemma~\ref{lem:bounds}, $2-r$, can be achieved: this is the case for $H=(ee^\top - I)$ for which one can check that optimal solutions of~\eqref{eq:SSCopt} are given by $e + (1-r) e_i$ for $i \in [r]$.

\paragraph{Tightening the lower bound}

When it comes to checking the SSC, we now show that we can actually tighten the constraint $x_i \geq 2-r$ to $x_i \geq -1$. 
The reason is that we are not interested in the exact optimal value of~\eqref{eq:SSCopt}, but only to know whether it is larger than one. 
\begin{lemma} \label{lem:reform}
Let $H \in \mathbb{R}^{r \times n}_+$ satisfy the NC-SSC.  
Then the optimal value of~\eqref{eq:SSCopt} is equal to one, that is, $p^* = 1$, 
if and only of the optimal value  
\begin{equation}
q^* \quad = \quad \max_{x} \|x\|_2 
 \; \text{ such that } \;  
e^\top x = 1,  H^\top x \geq 0, \text{ and } 
-1 \leq x_i \leq 1 \text{ for all } i, \label{eq:SSCoptBND}
\end{equation} 
is equal to one. 
\end{lemma}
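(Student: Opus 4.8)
The plan is to bracket $q^*$ by $1 \le q^* \le p^*$, which settles one implication at once, and then to prove the reverse implication by an explicit construction that ``rounds'' a solution of~\eqref{eq:SSCopt} with $\|x\|_2 > 1$ into the box $[-1,1]^r$ without losing that property.

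First I would record the two elementary bounds. The feasible set of~\eqref{eq:SSCoptBND} is exactly the feasible set of~\eqref{eq:SSCopt} intersected with $[-1,1]^r$, so $q^* \le p^*$. And each unit vector $e_i$ is feasible for~\eqref{eq:SSCoptBND} (indeed $e^\top e_i = 1$, $e_i \in [-1,1]^r$, and $H^\top e_i = H(i,:)^\top \ge 0$ since $H \ge 0$) with objective value $\|e_i\|_2 = 1$, so $q^* \ge 1$. Hence $p^* = 1$ forces $q^* = 1$; and since $p^* \ge q^* \ge 1$, it only remains to show that $p^* > 1$ implies $q^* > 1$.

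So suppose $p^* > 1$. By Lemma~\ref{lem:bounds} the feasible set of~\eqref{eq:SSCopt} is a polytope, so there is a feasible $x$ with $\|x\|_2 > 1$, and Lemma~\ref{lem:bounds} also gives $x \le e$. If in addition $x \ge -e$, then $x$ is already feasible for~\eqref{eq:SSCoptBND} and $q^* \ge \|x\|_2 > 1$. Otherwise set $M := -\min_i x_i > 1$, let $j$ be an index achieving $\max_i x_i$ (so $x_j \ge 1/r > 0$, because $e^\top x = 1$), and pull $x$ toward $e_j$ by exactly the amount needed to re-enter the box: put $t_0 := 1 - 1/M \in (0,1)$ and $z := (1-t_0)x + t_0 e_j$. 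The feasible set of~\eqref{eq:SSCopt} is convex and contains both $x$ and $e_j$ (for $e_j$ one uses $H^\top e_j \ge 0$ once more), so $z$ is feasible for~\eqref{eq:SSCopt}; moreover the factor $1-t_0 = 1/M$ maps each negative entry $x_i \ge -M$ to $x_i/M \ge -1$ and keeps nonnegative entries nonnegative, so $z \ge -e$, while $z \le e$ by Lemma~\ref{lem:bounds}. Hence $z$ is feasible for~\eqref{eq:SSCoptBND}.

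It then remains only to verify $\|z\|_2 > 1$, which is the one quantitative point. Expanding the square and using $\langle x, e_j\rangle = x_j \ge 0$,
\[
\|z\|_2^2 = (1-t_0)^2 \|x\|_2^2 + 2 t_0(1-t_0) x_j + t_0^2 \;\ge\; \frac{\|x\|_2^2}{M^2} + t_0^2 \;\ge\; 1 + t_0^2 \;>\; 1,
\]
where the middle inequality uses $\|x\|_2^2 \ge (\min_i x_i)^2 = M^2$ and the last uses $t_0 > 0$; thus $q^* \ge \|z\|_2 > 1$, as wanted. I expect this last estimate to be the crux: pulling $x$ toward $e_j$ could a priori overshoot the unit sphere, and what rescues the argument is that the very entry forcing the shrink factor $1/M$ is itself large enough in absolute value ($x_i^2 \ge M^2$) that the shrunken vector still has norm above $1$ — the only role of choosing $j$ with $x_j \ge 0$ being to keep the cross term nonnegative.
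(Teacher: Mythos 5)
Your proof is correct, and while it shares the paper's core idea --- take a convex combination of a ``bad'' feasible point of~\eqref{eq:SSCopt} with a unit vector, use convexity of the feasible set, and check the norm stays above $1$ --- the construction differs in a way that genuinely streamlines the argument. The paper pulls $x^*$ toward $e_i$ for an \emph{offending} index $i$ (an entry below $-1$), chooses $\lambda$ to pin $y_i=-1$, bounds $\|y\|_2>1$ via $\sum_{j\neq i}y_j=2$ and a Cauchy--Schwarz estimate, and then must iterate the trick over any remaining entries below $-1$ (and separately dispose of $r\le 3$ so that $\lambda\in(0,1)$). You instead pull toward $e_j$ with $j$ a \emph{maximal} coordinate (so the cross term $x_j$ is nonnegative) and shrink by the single factor $1/M$ with $M=-\min_i x_i$, which repairs all violating entries in one shot; the norm bound then follows from the elementary observation that the most negative entry alone contributes $M^2$ to $\|x\|_2^2$, so $\|z\|_2^2\ge \|x\|_2^2/M^2+t_0^2\ge 1+t_0^2>1$. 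This buys you a one-step construction with no iteration, no case split on $r$, and a self-contained quantitative estimate; the paper's variant, by pinning an entry at $-1$, yields the slightly stronger explicit bound $\|y\|_2^2>1+\tfrac{1}{r-1}$, but at the cost of the ``repeat until done'' step. All the small verifications in your write-up check out ($e_j$ feasible since $H\ge 0$, $x_j\ge 1/r$ from $e^\top x=1$, $z\le e$ from Lemma~\ref{lem:bounds} applied to the feasible point $z$, and $z_i\ge -1$ for all $i$), so nothing needs patching; note only that the existence of a feasible $x$ with $\|x\|_2>1$ already follows from $p^*>1$ by definition, so the polytope/attainment remark is not needed.
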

\begin{proof}
First note that, for $r \leq 3$, Lemma~\ref{lem:bounds} provides the result, since $x_i \geq 2-r \geq -1$. Hence we can assume $r \geq 4$.

We have $p^* \geq q^*$ since~\eqref{eq:SSCopt} is a relaxation of \eqref{eq:SSCoptBND}, 
and that $q^* \geq 1$ since $x=e_i$ for all $i$ are feasible solutions with objective equal to one. 

It remains to show that $p^* > 1$ implies $q^* > 1$. 
Let $x^*$ be an optimal solution of~\eqref{eq:SSCopt} with $p^* > 1$. 
By Lemma~\ref{lem:bounds}, $x^* \leq e$.  
If $x^*_i \geq -1$ for all $i$, we are done since $x^*$ is feasible for \eqref{eq:SSCoptBND} and $q^* = p^* > 1$. 
Otherwise assume $x_i^* < -1$ for some $i$. %Since $e^\top x^* = 1$, this implies $\sum_{j \neq i} |x_j| \geq  \sum_{j \neq i} x_j > 1$, and hence     $\sum_{j \neq i} x_j^2 \geq \frac{1}{r-1}$, and     $p^*     = \|x^*\|_2^2     > 1 + \frac{1}{r-1}$. 
Let us consider the solution $y = \lambda  x^* + (1-\lambda) e_i$ for some $\lambda \in [0,1]$ (to be chosen below).  
    By convexity of the feasible set, $y$ is feasible for~\eqref{eq:SSCopt}. Let us take $\lambda$ such that $y_i = -1$, that is, 
    \[
\lambda = \frac{2}{1-x_i} \; \in \;  
\left[\frac{2}{r-1},1\right),   
    \] 
    so that $\lambda \in (0,1)$ 
    since $2-r \leq x_i < -1$ (Lemma~\ref{lem:bounds}) and $r \geq 4$. 
    The new solution $y$ satisfies $y_i = -1$, while the other entries of $y$ are equal to that of $x^*$ multiplied by $\lambda \in (0,1)$, and hence their absolute value gets smaller. However, we have $\|y\|_2 > 1$ since $y_i = -1$ while $e^\top y = 1$. (In fact\footnote{This follows from the inequalities 
    $\sqrt{r-1} \sqrt{\sum_{j\neq i} |y_j|^2} \geq \sum_{j\neq i} |y_j| 
    \geq \sum_{j\neq i} y_j > 1$ which implies $\sum_{j\neq i} |y_j|^2 > \frac{1}{r-1}$.}, we have $\|y\|_2^2 > 1 + \frac{1}{r-1}$.) 
    Hence, we have constructed a feasible solution $y$ of~\eqref{eq:SSCopt} such that $y_i = -1$ while $\|y\|_2 > 1$. 
For any other entry of $y$ smaller than $-1$, we can apply the same trick as for $x^*$, and we will eventually get a feasible solution of~\eqref{eq:SSCoptBND} with at least one entry equal to $-1$, and hence an objective function value strictly larger than 1. 
    %> 1 + \frac{1}{2(r-1)}$. 
\end{proof}

In summary, we have the following theorem. 
\begin{theorem} \label{th:mainTH}
    The matrix $H \in \mathbb{R}^{r \times n}_+$ satisfies the SSC if and only if the following two conditions are satisfied  
    \begin{enumerate}
        \item The matrix $H$ satisfies the NC-SSC, that is, $e-e_i \in \cone(H)$ for all $i \in [r]$. 

        \item The optimal value of~\eqref{eq:SSCoptBND} is equal to $q^* = 1$, 
        and 
        the set of optimal solutions is
        $\{e_i\}_{i=1}^r$. 
    \end{enumerate}
\end{theorem}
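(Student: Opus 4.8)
The plan is to assemble Theorem~\ref{th:mainTH} directly from the pieces already established in the excerpt, so the proof is essentially a matter of bookkeeping rather than new ideas. The logical skeleton is: $H$ satisfies the SSC iff it satisfies \eqref{eq:SSCmain} and \eqref{eq:SSCcond2}; I would first observe that the NC-SSC is a genuine prerequisite, since \eqref{eq:SSCmain} together with $\mathcal{T}\subset\mathcal{C}$ forces $\mathcal{T}\subset\cone(H)$. Hence if $H$ violates the NC-SSC it violates the SSC, and both sides of the claimed equivalence fail; this lets me assume the NC-SSC throughout the rest of the argument and thereby invoke Lemma~\ref{lem:etxpos}, Corollary~\ref{cor:etx1}, Lemma~\ref{lem:bounds}, and Lemma~\ref{lem:reform} freely.

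Assuming the NC-SSC, I would handle condition \eqref{eq:SSCmain} first. By Corollary~\ref{cor:etx1}, \eqref{eq:SSCmain} fails exactly when \eqref{eq:SSCmainetx1} has a solution, i.e.\ when $p^*>1$ in \eqref{eq:SSCopt}; and by Lemma~\ref{lem:reform}, $p^*=1$ iff $q^*=1$ in the box-constrained problem \eqref{eq:SSCoptBND}. Since the unit vectors $e_i$ are always feasible for \eqref{eq:SSCoptBND} with objective value $1$, we always have $q^*\geq 1$, so ``$q^*=1$'' is really the statement ``$q^*\leq 1$'', i.e.\ no feasible $x$ of \eqref{eq:SSCoptBND} has $\|x\|_2>1$. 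Putting this together: under the NC-SSC, \eqref{eq:SSCmain} holds iff $q^*=1$.

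It then remains to connect condition \eqref{eq:SSCcond2} to the statement that the optimal set of \eqref{eq:SSCoptBND} is exactly $\{e_i\}_{i=1}^r$, \emph{given} that $q^*=1$. When $q^*=1$, the optimal solutions of \eqref{eq:SSCoptBND} are precisely the feasible $x$ with $\|x\|_2=1$ and $e^\top x=1$; by Lemma~\ref{lem:bounds} such $x$ automatically satisfy $-1\le x_i\le 1$, so the box constraints are inactive and these are exactly the feasible points of \eqref{eq:SSCopt} with $\|x\|_2=1$, $e^\top x=1$. On the other hand, \eqref{eq:SSCcond2} concerns $q\in\cone^*(H)\cap\{x: e^\top x=\|x\|_2\}$; using the scaling degrees of freedom and Lemma~\ref{lem:etxpos} (which gives $e^\top q>0$ for nonzero feasible $q$), any such nonzero $q$ can be normalized to satisfy $e^\top q=\|q\|_2=1$, and conversely. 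The point $q=0$ is excluded on both sides ($0$ is not a scaling of any $e_i$, and $0$ is not optimal since the $e_i$ give value $1$). I also need the elementary fact $\|e_i\|_2=1=e^\top e_i$ so that each $e_i$ is indeed feasible and optimal. Hence the ``nonnegative unit-norm'' optimal solutions of \eqref{eq:SSCoptBND} are in scaling-correspondence with the elements of $\cone^*(H)\cap\{e^\top x=\|x\|_2\}$, and \eqref{eq:SSCcond2} --- every such point is a scaling of some $e_i$ --- is equivalent to the optimal set of \eqref{eq:SSCoptBND} being exactly $\{e_i\}_{i=1}^r$.

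The main obstacle, such as it is, is not difficulty but care with the quantifier structure: \eqref{eq:SSCcond2} is only meaningful once \eqref{eq:SSCmain} holds (it characterizes \emph{which} boundary rays survive), so one must be careful to present the equivalence as ``SSC $\Leftrightarrow$ (NC-SSC) and ($q^*=1$) and (opt set $=\{e_i\}$)'', checking that when NC-SSC fails or when $q^*>1$ the claimed right-hand side correctly also fails. A minor technical point to get right is that the scaling identification between ``$e^\top q=\|q\|_2$'' (a homogeneous condition) and ``$e^\top x=1=\|x\|_2$'' (the normalization used for the optimal set) is a bijection on rays, so that ``is a scaling of $e_i$'' on one side matches ``equals $e_i$'' on the other; this is where Lemma~\ref{lem:etxpos} and the $x\le e$ bound of Lemma~\ref{lem:bounds} do the real work of ruling out degenerate normalizations.
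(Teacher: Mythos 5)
Your proposal is correct and follows essentially the same route as the paper, whose proof simply cites Corollary~\ref{cor:etx1} and Lemma~\ref{lem:reform}; you have merely filled in the bookkeeping (NC-SSC as a prerequisite, the scaling correspondence between \eqref{eq:SSCcond2} and the optimal set of \eqref{eq:SSCoptBND}) that the paper leaves implicit. The only nitpick is that the bound $-1\le x_i\le 1$ for optimal solutions with $\|x\|_2=1$ follows directly from $|x_i|\le\|x\|_2$ rather than from Lemma~\ref{lem:bounds}, but this does not affect the argument.
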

\begin{proof}
    This follows from %the reformulation of the SSC in Section~\ref{sec:formu}, and 
    Corollary~\ref{cor:etx1} and Lemma~\ref{lem:reform}. 
\end{proof}

\paragraph{Gurobi: Getting more than one solution, early stopping, and time limit} 

To check the SSC, we therefore have to check the NC-SSC, and then solve~\eqref{eq:SSCoptBND}. If the optimal value is equal to one, we need to check whether there exist optimal solutions different from 
$\{e_i\}_{i=1}^r$. 
To solve~\eqref{eq:SSCoptBND}, we rely on the global non-convex optimization software Gurobi\footnote{\url{https://www.gurobi.com/solutions/gurobi-optimizer/}}. 
%Recall we also need to check that the only optimal solutions of~\eqref{eq:SSCoptBND} are the $e_i$'s. 
Luckily, Gurobi allows one to generate more than one solution. Hence, it suffices to ask Gurobi to provide at least $r+1$ solutions using the following paramters:  \texttt{params.PoolSearchMode = 2; params.PoolSolutions = r+1}.  

%***\ngc{Robert: What are the guarantees of Gurobi with these parameters? Is Gurobi guaranteed to generate \texttt{params.PoolSolutions} solutions if they exist? I suppose it will explore the tree as long as possible to generate the required number of solutions, if possible?} 

%***\ngc{Robert: is it possible and useful to provide the good/optimal solutions $x=e_i$ to Gurobi? (Add them in the pool directly)} \\ 

Moreover, since we only care about checking whether the optimal value of~\eqref{eq:SSCoptBND} is larger than one, we can stop Gurobi as soon as the best found solution has objective strictly larger than 1, and we use the parameter: \texttt{params.BestObjStop = 1.0001}. This stopping criterion is key, as it allows Gurobi to stop very early for matrices far from satisfying the SSC.   

Finally, we do not know in advance how long it will take for Gurobi to solve~\eqref{eq:SSCoptBND}, and hence it is useful to use a timelimit (e.g., 5 minutes). If Gurobi cannot finish within 5 minutes, it means it was not able to find a solution with objective 
larger than 1. 
%***\ngc{Robert: It is possible to stop Gurobi if it finds a solution with objective equal to one, but different from the  $e_i$'s? } 
Hence, on top of the NC-SSC being satisfied (this is the first step of the algorithm), there has been additional necessary conditions satisfied, that is, all nodes explored within the branch-and-bound strategy do not violate the system~\eqref{eq:SSCmainetx1}. Hence although we cannot guarantee the SSC, the chances the SSC is satisfied are high. \\

Algorithm~\ref{algo:checkSSC} summarizes our algorithm to check the SSC. To the best of our knowledge, it is the first algorithm to exactly check the SSC, up to machine precision. %\ngc{This Algorithm is a bit useless, I think. Theorem 2 is already a good summary.} 
\algsetup{indent=2em}
\begin{algorithm}[ht!]
\caption{Checking the SSC \label{algo:checkSSC}}
\begin{algorithmic}[1] 
\REQUIRE A nonnegative matrix $H \in \mathbb{R}^{r \times n}_+$. 

\ENSURE SSC $= 1$ is $H$ satisfies the SSC (Definition~\ref{def:SSC}), SSC $= 0$ otherwise. 
    \medskip  

	\STATE \emph{\% First step: Check the NC-SSC.} 
\FOR{$i$ = 1 : $r$} 
  \IF{there does not exist a feasible solution to the system 
  $e-e_i = Hx, x \geq 0$}
  \STATE SSC = 0; return. 
  \ENDIF 
\ENDFOR

\STATE \emph{\% Second step: Check the SSC.} 

\STATE Solve 
\[
q^* \quad = \quad \max_{x} \|x\|_2 
 \; \text{ such that } \;  
e^\top x = 1,  H^\top x \geq 0, \text{ and } 
-1 \leq x_i \leq 1 \text{ for all } i. 
\]
\IF{$q^* > 1$ \textbf{or} there is an optimal solution $x^* \neq e_i$ for $i\in [r]$}
\STATE SSC $= 0$. 
\ELSE
\STATE SSC $= 1$.
  \ENDIF
 
\end{algorithmic}  
\end{algorithm}

%\paragraph{Computational complexity} \ngc{Robert, any idea what is the worst case? Is it like polynomial in $n$ since we do not branch in this dimension, and exponential in the number of variables $r$ which we branch on?} 

\section{Numerical experiments}  \label{sec:numexp}

In~\cite{fu2018identifiability}, authors use a heuristic to solve~\eqref{eq:SSCoptBND}, while, in~\cite[Chapter 4.2.3.6]{gillis2020nonnegative}, the author relied on the necessary condition that the vectors $\{e-e_i\}_{i=1}^r$ belong to the relative interior of $\cone(H)$. For the first time, we will provide results where the SSC is checked exactly. 
We first provide results on synthetic data, and then on real hyperspectral images factorized with minimum-volume NMF. 

All experiments are performed with a 
12th Gen Intel(R) Core(TM) i9-12900H  2.50 GHz, 32GB RAM, 
on MATLAB R2019b. 
The code and data sets are available on \url{https://gitlab.com/ngillis/check-ssc}. 
The code to is also available in Python. The implementation was kindly done by Subhayan Saha.

\subsection{Synthetic data}  

As explained in Section~\ref{sec:formu}, for a matrix to satisfy the SSC, it requires a certain degree of sparsity. 
Let us generate matrices $H \in \mathbb{R}^{r \times n}_+$ whose columns are $k$-sparse, that is, they have $k$ non-zero entries for $1 \leq k \leq r-1$. 
The position of the $k$ non-zero entries are picked uniformly at random, while the $k$ non-zero  values are picked using the uniform distribution in the probability simplex of dimension $k$, via the Dirichlet distribution with all parameters equal to one. 
We will use two values of $n$: $5r$ and $10r$. Note that the SSC is more likely to be satisfied when $n$ is larger, since the cone generated by the columns of $H$ is more likely to be larger. 
Figure~\ref{fig:SSC5n} displays the number of times, over 20 runs, the SSC was satisfied for $k$-sparse $r$-by-$n$ matrices. 
\begin{figure}[ht!]
\begin{center}
\begin{tabular}{cc}
   \includegraphics[height=8cm]{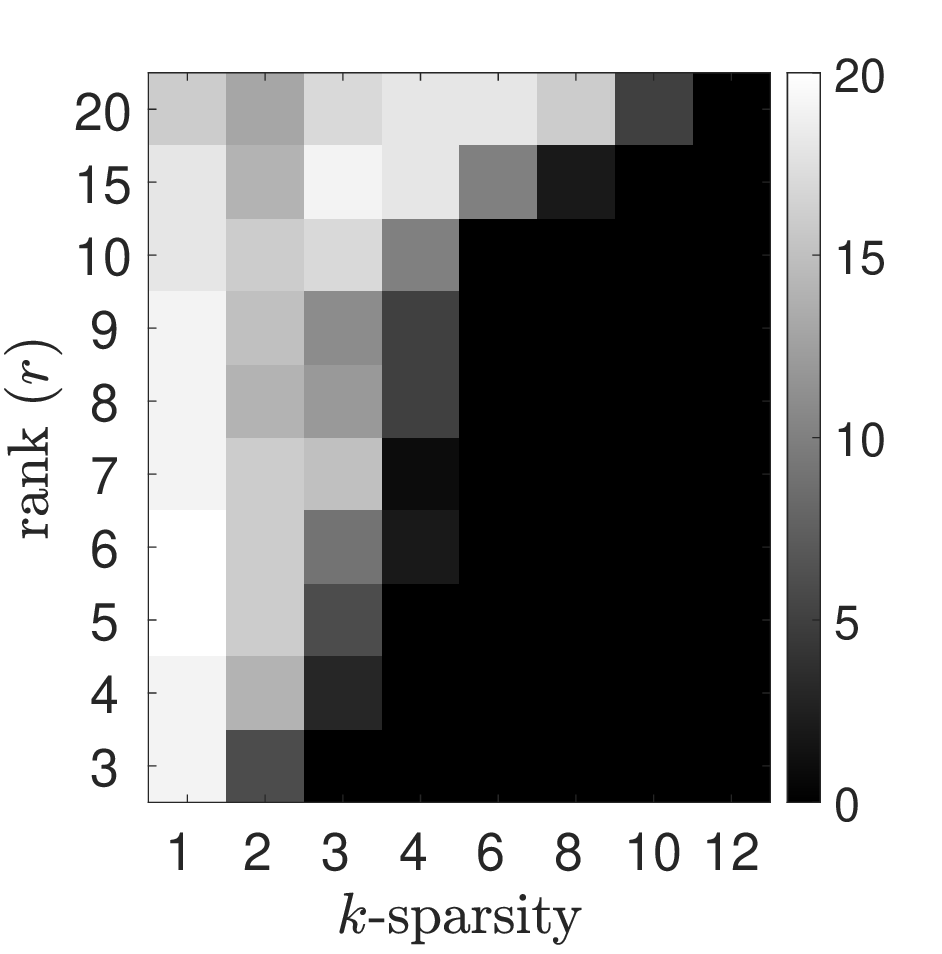}   
   &  \includegraphics[height=8cm]{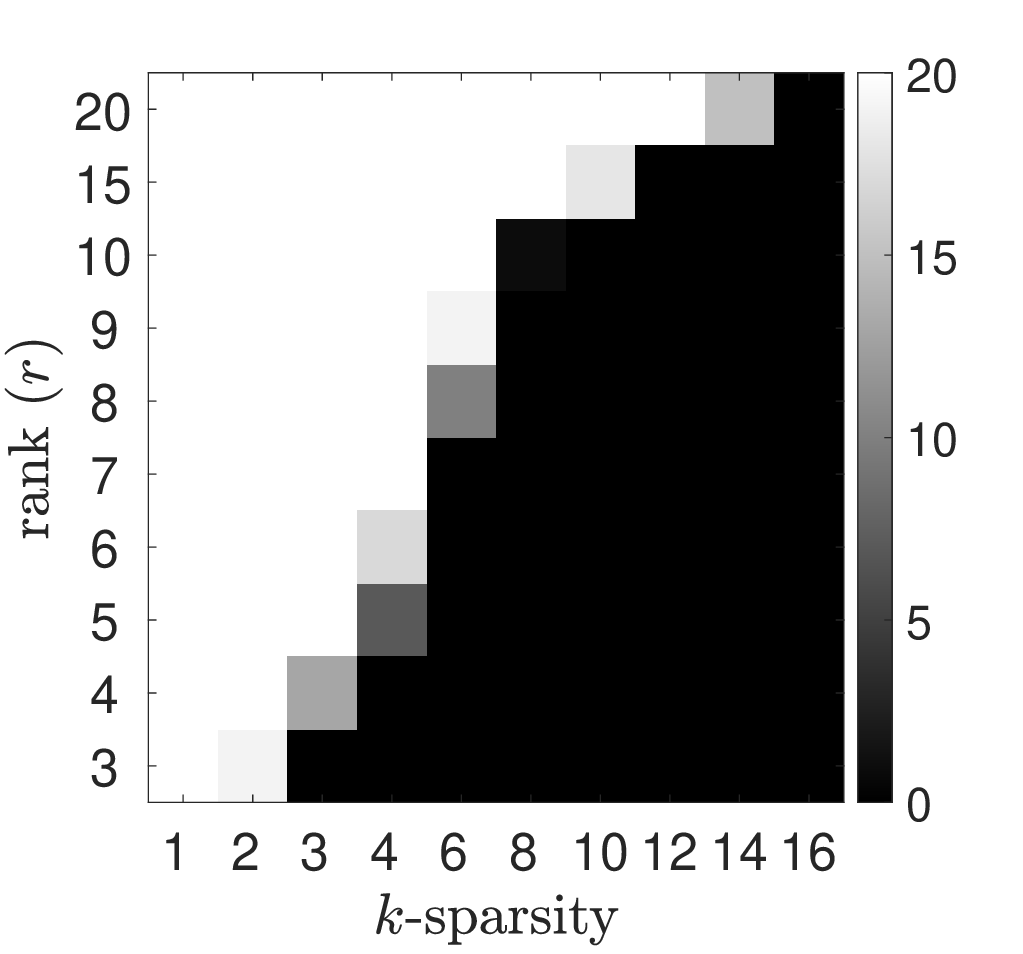} \\ 
  (a) $n=5r$. & (b) $n=10r$. 
\end{tabular} 
\caption{Number of times, over 20 trials, the SSC for $r$-by-$n$ matrices whose columns are $k$-sparse satisfied the SSC.}
\label{fig:SSC5n}
\end{center}
\end{figure} 

 As expected, we observe that, for $n$ larger ($n = 10r$), more matrices satisfy the SSC. In fact, for $n=5r$, many matrices do not satisfy the SSC, even when $k$ is small, because $n$ is not sufficiently large. In particular, even when $k=1$, all matrices do not satisfy the SSC: the reason is that matrices with 1-sparse columns satisfy the SSC if and only if they contain all the unit vectors, up to scaling. Since the columns of $H$ are generated randomly and there are only $5r$ of them, there is a positive and non-negligible probability that not all unitary vectors are generated. 
 In summary, as $n$ increases, the phase transition, that is, the largest value of $k$ that allows the SSC to be satisfied becomes larger.

Tables~\ref{tab:SSCsynt5n} and~\ref{tab:SSCsynt10n} report, for $n = 5r$ and $n=10r$ respectively, the average computational time, the number of instances that reached the 5-minute time limit, and the number of times the NC-SSC was satisfied but not the SSC. 
\begin{table}[h!]
    \centering
    \begin{tabular}{c||c|c|c|c|c|c|c|c}
$r$/$k$ &  1 &  2 &  3 &  4 &  6 &  8 & 10 & 12  \\ \hline  \hline  
 3 & 0.0 & 0.0,  0,  2&         /      &         /      &         /      &         /      &         /      &         /      \\ \hline 
 4 & 0.0 & 0.0,  0,  1& 0.0,  0,  1&         /      &         /      &         /      &         /      &         /      \\ \hline 
 5 & 0.0 & 0.0,  0,  1& 0.1,  0,  4& 0.0 &         /      &         /      &         /      &         /      \\ \hline 
 6 & 0.0 & 0.1 & 0.1,  0,  4& 0.0,  0,  3&         /      &         /      &         /      &         /      \\ \hline 
 7 & 0.0 & 0.1 & 0.3,  0,  1& 0.0,  0,  2& 0.0 &         /      &         /      &         /      \\ \hline 
 8 & 0.1 & 0.1,  0,  2& 0.3,  0,  4& 0.3,  0,  7& 0.0 &         /      &         /      &         /      \\ \hline 
 9 & 0.0 & 0.2,  0,  3& 0.4,  0,  2& 0.4,  0,  7& 0.0 & 0.0 &         /      &         /      \\ \hline 
10 & 0.1 & 0.2,  0,  2& 0.9,  0,  3& 1.1,  0,  4& 0.0,  0,  1& 0.0 &         /      &         /      \\ \hline 
15 & 0.1 & 1.1,  0,  3& 7.1,  0,  1& 280 & 117,  0,  4& 30,  2,  2& 0.0 & 0.0 \\ \hline 
20 & 0.2 & 3.0,  0,  1& 72,  0,  1& 271, 18  &  271, 18,  2& 240, 16,  4& 90,  6,  5& 0.0  
    \end{tabular}
    \caption{Checking the SSC for  $r$-by-$5r$  matrices whose columns are $k$-sparse. The table reports, over 20 trials, the average computational time in seconds, and, if they are non-zeros, the number of times the 5-minute time limit was reached, and the number of times the NC-SSC was satisfied but not the SSC. To make the table compact, zeros are not reported after the computational time. For example, 26 for $r=15$, $k=4$ means (26,0,0), that is, no run went over the 5 minutes, and all matrices that satisfied the NC-SSC satisfied the SSC. Similarly, (293, 16) for $r=20$, $k=4$ means (293,16,0), that is, 16 runs went over the 5 minutes, and all matrices satisfied that satisfied the NC-SSC satisfied the SSC.} 
    \label{tab:SSCsynt5n}
\end{table}
\begin{table}[h!]
    \centering
    \begin{tabular}{c||c|c|c|c|c|c|c|c|c|c} 
 $r$/$k$ &  1 &  2 &  3 &  4 &  6 &  8 & 10 & 12   & 14 & 16 \\ \hline \hline  
     3 & 0.0  & 0.1  &         /      &         /      &         /      &         /      &         /      &         /      &         /      &         /      \\ \hline 
 4 & 0.0 & 0.1 & 0.1,  0,  1&         /      &         /      &         /      &         /      &         /      &         /      &         /      \\ \hline 
 5 & 0.0 & 0.2 & 0.2 & 0.1,  0,  2&         /      &         /      &         /      &         /      &         /      &         /      \\ \hline 
 6 & 0.0 & 0.3 & 0.4 & 0.5,  0,  1&         /      &         /      &         /      &         /      &         /      &         /      \\ \hline 
 7 & 0.0 & 0.3 & 0.5 & 0.7 & 0.0 &         /      &         /      &         /      &         /      &         /      \\ \hline 
 8 & 0.1 & 0.4 & 0.7 & 1.1 & 1.2 &         /      &         /      &         /      &         /      &         /      \\ \hline 
 9 & 0.0 & 0.4 & 1.0 & 1.7 & 3.5 & 0.0 &         /      &         /      &         /      &         /      \\ \hline 
10 & 0.1 & 0.6 & 1.6 & 2.5 & 6.3 & 0.7,  0,  1&         /      &         /      &         /      &         /      \\ \hline 
15 & 0.1 & 1.7 & 5.8 & 26 & 127 & 301, 20  &  271, 18  &  0.0 & 0.0 &         /      \\ \hline 
20 & 0.2 & 3.2 & 31 & 293, 16  &  301, 20  &  300, 20  &  300, 20  &  300, 20  &  225, 15 & 0.0  
    \end{tabular}
    \caption{Checking the SSC for  $r$-by-$10r$  matrices whose columns are $k$-sparse. The table reports, over 20 trials, the average computational time in seconds, and, if they are non-zeros, the number of times the 5-minute time limit was reached, and the number of times the NC-SSC was satisfied but not the SSC. To make the table compact, zeros are not reported after the computational time, as in Table~\ref{tab:SSCsynt5n}. 
    %For example, 26 for $r=15$, $k=4$ means (26,0,0), that is, no run went over the 5 minutes, and all matrices satisfied that satisfied the NC-SSC satisfied the SSC. Similarly, (293, 16) for $r=20$, $k=4$ means (293,16,0), that is, 16 runs went over the 5 minutes, and all matrices satisfied that satisfied the NC-SSC satisfied the SSC. 
    %The symbol $\bar{0}$ means 0, 0, that is, no run went over the 5 minutes, and all matrices satisfied that satisfied the NC-SSC satisfied the SSC.
    } 
    \label{tab:SSCsynt10n}
\end{table}

We observe the following: 
\begin{itemize}

\item It is harder for Gurobi to check the SSC close to the phase transition; this explains why the computational cost increases and then decreased as $k$ increases. 

\item For $r \leq 10$ and $n=5r$, the computational time is on average at most 1.1 seconds, and for  $r \leq 10$ and $n=10r$, at most 6.3 seconds.  

\item For $n = 5r$, the time limit is often reached for $r=20$ and $4 \leq k \leq 10$. For $n = 10r$, it happens for $r = 15$ and $k=8, 10$, and for $r=20$ and $4 \leq k \leq 14$.

\item Close to the phase transition, it happens more often that the SSC is not satisfied while the NC-SSC is. This happens for example 7 times out of 20 for $n=5r$, $r=8,9$ and $k=4$. However, this does not happen often when $n=10r$: only 5 times over all the cases.  
    
\end{itemize}

One may be a bit disappointed by the fact that Gurobi reaches the time limit on these medium-scale problems. However, in practice, typically $r$ and $k$ are small. For example, 
\begin{itemize}
    \item in hyperspecral imaging, $n$ is the number of pixels in the images (typically larger than 10000), $r$ is the number of materials present in the image (typically smaller than 10), and 
$k$ is the number of materials present in the pixels (typically smaller than 3). 

\item in topic modeling,  $n$ is the number of documents (typically larger than 1000), $r$ is the number of topics discussed in these documents  (typically smaller than 30), and 
$k$ is the number of topics discussed by the  documents present each pixel (a few). 
\end{itemize}

In these real-world scenarios, because $r$ and $k$ are small, it is very possible that Gurobi can check the SSC within a reasonable amount of time. This is illustrated in the next section on hyperspectral images. 

Moreover, if Gurobi does not succeed to solve~\eqref{eq:SSCoptBND} within the allotted time, it still provides a useful information: it means it was not able to find other solutions than the $\{e_i\}$'s and hence provides a necessary condition for the SSC, stronger than the NC-SSC.

\subsection{Real hyperspectral images} \label{sec:hsi}

Given a matrix $X = W^{\#}H^{\#}$, where $H^{\#}$ satisfies the SSC, $(W^{\#},H^{\#})$ can be identified by solving the following minimum-volume NMF problem~\cite{leplat2019blind}:   
\begin{equation} \label{eq:ExactminvolNMF}
\min_{W, H} \det(W^\top W) \quad \text{ such that } \quad 
X = WH, W \geq 0,  H \geq 0 \text{ and } W^\top e = e. 
\end{equation}
In practice, one has to balance the data fitting term, $\|X-WH\|_F^2$, and the volume regularization\footnote{The regularizer $\logdet(W^\top W)$ has been shown to perform better than $\det(W^\top W)$ in practice~\cite{ang2019algorithms} .}, $\logdet(W^\top W)$, by solving 
\begin{equation} \label{eq:minvolNMF}
\min_{W, H} \|X - WH\|_F^2 + \lambda \logdet(W^\top W) \quad  \text{ such that } \quad H \geq 0, W \geq 0 \text{ and }  W^\top e = e, 
\end{equation}
for some penalty parameter $\lambda > 0$. 

We now apply this model on 5 widely-used hyperspectral images: the $(i,j)$th entry of matrix $X \in \mathbb{R}^{m \times n}$ contains the reflectance of the $j$th pixel at the $i$ wavelength. Each column of $X$ is the spectral signature of a pixel, and each row a vectorized image at a given wavelength. Performing NMF on such a matrix allows one to extract the spectral signatures of the pure materials (called endmembers) as the columns of $W$, and their abundances in each pixel as the rows of $H$; see, e.g.,~\cite{ma2014signal} and the references therein.  

We solve~\eqref{eq:minvolNMF} with the code available from \url{https://gitlab.com/ngillis/nmfbook/-/tree/master/algorithms/min-vol%20NMF}, 
and we use the same parameters as in~\cite{gillis2020nonnegative}. 
Table~\ref{tab:SSChsi} reports the results, while Figure~\ref{fig:abmaps} displays the abundance maps, where each image corresponds to a row of $H$ reshaped as an image. 
\begin{table}[h!]
    \centering
    \begin{tabular}{c|c|c|c|c|c|c|c}
    Data set & $m$ & $n$ & $r$ & sparsity ($H$) & NC-SSC & SSC   & time (s.)   \\ \hline 
 Samson  & 156 & $95 \times 95$   & $3$ &  40\%    &  no   & no &  0.0    \\
 Terrain & 166 & $500 \times 307$   & $4$  & 40\%  & yes   & no   &  2 \\ 
 Jasper ridge  & 188 & $100 \times 100$  & $4$   &  45\%    & yes & yes &  0.5    \\
 Urban  & 162 & $307 \times 307$ & $6$   &  50\%   & yes & yes & 9   \\
San Diego airport & 158 & $400\times 400$ & $7$  &    53\%   & yes & yes &  14     
    \end{tabular}
    \caption{Given a minimum-volume NMF of hyperspectral images, $X \approx WH$ with $\det(W^\top W)$ minimized, 
    this table reports 
    the sparsity of $H$ (that is, percentage of zero entries), 
    whether the NC-SSC and the SSC are satisfied, and the time that it took to check the SSC with Gurobi.} 
    \label{tab:SSChsi}
\end{table}
In all cases, the SSC of the matrix $H$ could be checked within 15 seconds. The Terrain data set is interesting because it does satisfy the NC-SSC, but not the SSC. 

To the best of our knowledge, this is the first time the identifiability of the NMF of real-world hyperspectral images is guaranteed. 
More precisely, the solution $(W^*,H^*)$ obtained by solving~\eqref{eq:minvolNMF} on the noisy $X$ is the unique solution of~\eqref{eq:ExactminvolNMF} when factorizing $X^* = W^*H^*$.

%This indicates that, perhaps, the penalty parameter $\lambda$ should be increased. The information about the fact that the SSC is not satisfied could lead to a new way to tune the $\lambda$ parameter in~\eqref{}. 
%\section{Generalization to Polytopic Matrix Factorization}

\section{Conclusion} 

In this paper, we have provided a formulation, \eqref{eq:SSCoptBND}, suitable to check the SSC with non-convex quadratic optimization solvers (see Theorem~\ref{th:mainTH}), and we used Gurobi. This allows one to check, a posteriori, whether the solutions to various matrix factorization problems with nonnegativity constraints are essentially unique. 
We illustrated the use of our algorithm on synthetic data sets, as well as to check the uniqueness of minimum-volume NMF solution in hyperspectral images.

 % \section*{Acknowledgments}

%\section*{Appendix: Abundance maps for the hyperspectral images} 

\begin{figure}[ht!]
\begin{center}
\begin{tabular}{c}
   Samson \\ 
   \includegraphics[width=5.5cm]{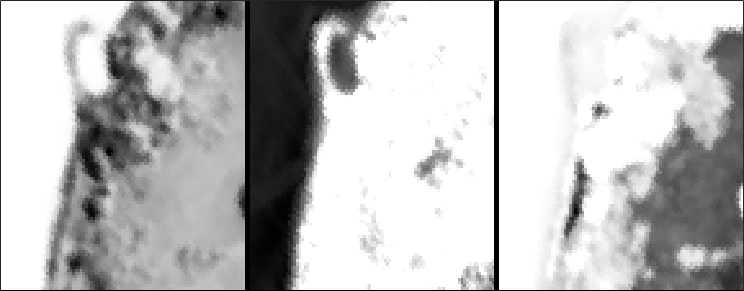}   \\ 
Terrain \\ 
    \includegraphics[width=12cm]{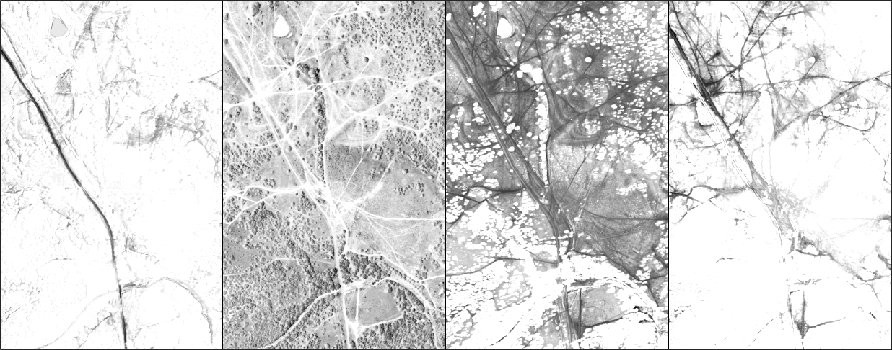} \\ 
    Jasper \\ 
     \includegraphics[width=12cm]{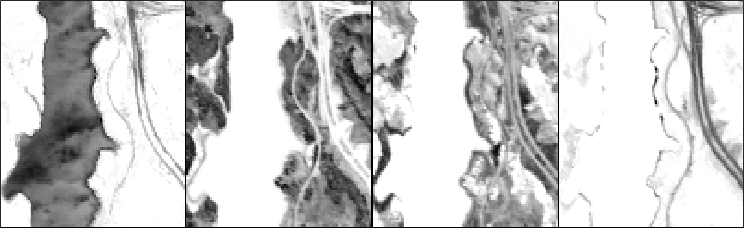} \\ 
     Urban \\ 
      \includegraphics[width=\textwidth]{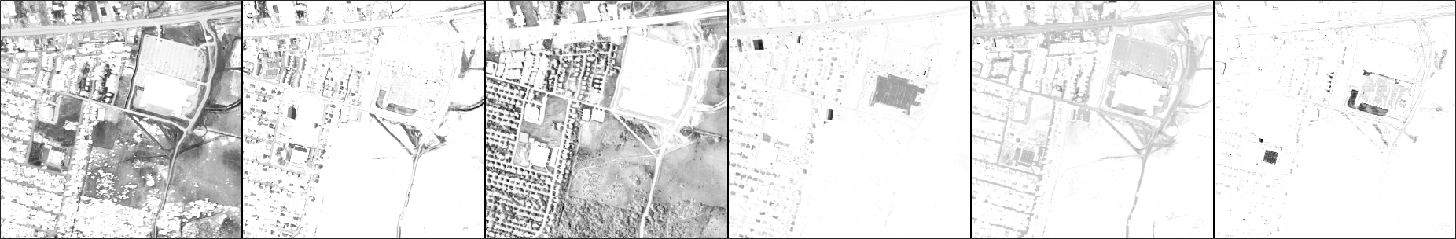} \\ 
      San Diego airport \\
       \includegraphics[width=\textwidth]{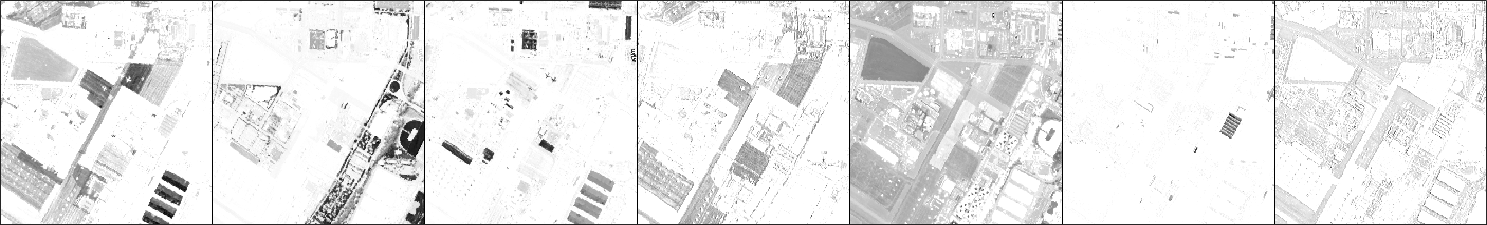}  \\  
\end{tabular} 
\caption{Abundance maps computed via minimum-volume NMF~\eqref{eq:minvolNMF}. Each image corresponds to the abundance map of a material which is a row of $H$ reshaped as an image.}
\label{fig:abmaps}
\end{center}
\end{figure}

\newpage 

\small 
\bibliographystyle{spmpsci}
\bibliography{BibliographyNMFbook}

\end{document}